\crefname{section}{Sec.}{Secs.}
\Crefname{section}{Section}{Sections}
\Crefname{table}{Table}{Tables}
\crefname{table}{Tab.}{Tabs.}
\newtheorem{theorem}{Theorem}
\newtheorem{lemma}[theorem]{Lemma}
\def\ie{\emph{i.e.}}
\def\eg{\emph{e.g.}}
\def\etal{\emph{et al.}}
\def\wrt{\emph{w.r.t.}}
\ificcvfinal\pagestyle{empty}\fi
\begin{document}

\title{Learning with Noisy Labels: \\ Interconnection of Two Expectation-Maximizations}

\author{Heewon Kim$^{1,2}$, Hyun Sung Chang$^1$, Kiho Cho$^1$, Jaeyun Lee$^1$ and Bohyung Han$^2$\\
$^1$Samsung Advanced Institute of Technology\quad
$^2$ Seoul National Univ.\\
{\tt\small \{heewonly.kim, hyun.s.chang, kiho.cho, jaeyun91.lee\}@samsung.com, bhhan@snu.ac.kr}}
\date{}
\maketitle

\begin{abstract}
Labor-intensive labeling becomes a bottleneck in developing computer vision algorithms based on deep learning.
For this reason, dealing with imperfect labels has increasingly gained attention and has become an active field of study.
We address learning with noisy labels (LNL) problem, which is formalized as a task of finding a structured manifold in the midst of noisy data.
In this framework, we provide a proper objective function and an optimization algorithm based on two expectation-maximization (EM) cycles.
The separate networks associated with the two EM cycles collaborate to optimize the objective function, where one model is for distinguishing clean labels from corrupted ones while the other is for refurbishing the corrupted labels.
This approach results in a non-collapsing \textit{LNL-flywheel} model in the end.
Experiments show that our algorithm achieves state-of-the-art performance in multiple standard benchmarks with substantial margins under various types of label noise.
\end{abstract}

\section{Introduction}
\label{sec:intro}

The recent great success of deep learning relies heavily on large-scale high-quality datasets with manual annotations.
However, obtaining datasets with exquisite labels is extremely time-consuming and expensive.
Then, one reasonable solution to construct a large-scale dataset with labels is to extract tags or keywords from automatically crawled images using search engines~\cite{xiao2015learning, li2017webvision}.
Unfortunately, the data collected by this strategy inevitably suffer from issues related to label integrity and consistency; the ratio of corrupted labels in real-world datasets is reported to be 8.0\% to 38.5\%~\cite{xiao2015learning, li2017webvision, lee2017cleannet, song2019selfie}.

The rapid increase in the number of parameters enables deep neural networks (DNNs) to perform better on challenging tasks, but, at the same time, it makes DNN models more vulnerable to noisy labels.
Since DNNs with large capacities can fit functions with arbitrary complexity~\cite{krause2016unreasonable, arpit2017closer}, they are prone to overfitting the entire training dataset including examples with corrupted labels~\cite{zhang2017understanding}.
Hence, developing robust training methods in the presence of label noise is now recognized as a core technology to solve real-world problems.

A dataset with label noise is composed of examples with both clean and corrupted labels.
The main goal of learning with noisy labels (LNL) is to separate the training data into two groups depending on the reliability of their labels.
Once we accurately identify clean examples, it is possible to adopt existing successful techniques for training deep neural networks, \eg, semi-supervised learning~\cite{berthelot2019mixmatch}, data augmentation~\cite{zhang2018mixup,nishi2021augmentation}, to leverage pseudo-labels as well as synthetic data.
In this context, clean sample selection has been actively studied for LNL research~\cite{song2022survey}.

\begin{figure}[t]
\centering
\includegraphics[width=0.9\linewidth]{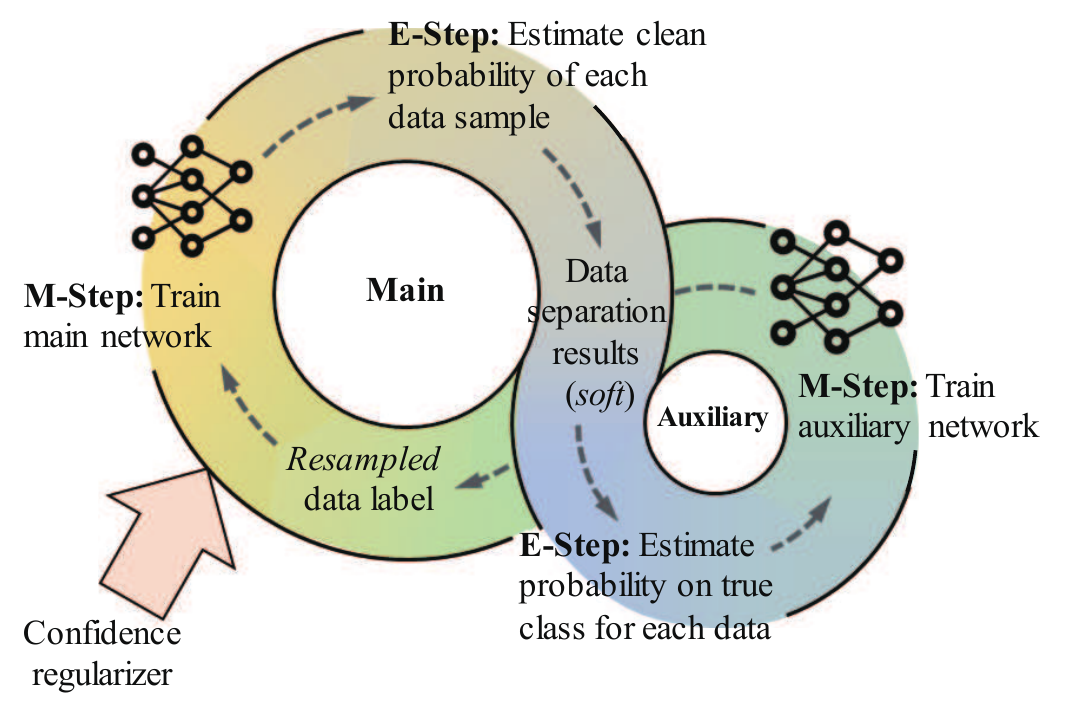}
\caption{\label{fig:teaser} An illustration of the proposed framework. Main EM and auxiliary EM are jointly connected with their distinct roles, which results in cooperative learning of LNL problem.}
\vspace{-1em}
\end{figure}

Since DNNs tend to accommodate clean samples better than corrupted ones in early stages of training but gradually overfit to noisy samples~\cite{arpit2017closer,bai2021understanding}, most sample selection methods first take small-loss examples as clean ones after a short warm-up period~\cite{han2018coteaching,jiang2018mentornet,shen2019learning,chen2019understanding,li2020dividemix}.
Then, the DNNs are trained with the selected samples, striving to widen or at least maintain the gap of the losses between the two groups.
Recent approaches employ multiple DNNs for ensemble predictions, which empirically proves to be effective in preventing incorrect selections from being accumulated~\cite{han2018coteaching,jiang2018mentornet,li2020dividemix, karim2022unicon}.
While these methods are reported to be robust to the LNL problem, there is still lack of rationale that the sample selection performance improves over the iterative processes. It remains unclear whether there exist unfortunate paths to getting stuck in a vicious cycle known as \textit{self-confirming bias}.

In this paper, we define an objective function for LNL, to fit the data with clean labels into a structured manifold and the data with corrupted labels to an outlier distribution.
We find, through non-parametric analysis, that the confidence regularizer~\cite{cheng2021cores} can properly enforce the structuredness of the manifold.
We also propose a novel algorithm to optimize the objective function through two expectation-maximization (EM) cycles, the so-called \textit{LNL-flywheel}. 
The LNL-flywheel is illustrated in Figure \ref{fig:teaser}.
Specifically, our algorithm forms the main EM cycle to maximize the overall objective function by distinguishing corrupted labels from clean ones while the auxiliary EM cycle refurbishes the training data with corrupted labels by estimating their pseudo-labels.
Note that the main and auxiliary EM cycles involve two separate networks and take turns jointly increasing the objective function until convergence.
The final outputs are given by the model associated with the auxiliary EM cycle.

Our contributions are summarized as follows:
\begin{itemize}
\item We formulate and justify an objective function of an LNL task, which consists of a likelihood term for noisy data fitting and a structural manifold regularizer. We show, through non-parametric limit analysis, that the confidence regularization is well-suited to prevent the model from collapsing into trivial solutions.
\vspace{-0.5em}
\item We provide a novel algorithm, called \textit{LNL-flywheel}, to maximize the proposed objective function. 
It runs on two EM cycles that are interconnected, in which two separate networks are trained. LNL-flywheel is a principled mechanism that the two models are jointly optimize the single objective without collapsing.
\vspace{-0.5em}
\item We achieve state-of-the-art performance in multiple standard benchmarks under environments with diverse label noise types by substantial margins. 
Unlike most of the recent methods relying on ensemble models, we only use the auxiliary network for inference, saving memory and processing time.
\end{itemize}

\section{Related work}
\label{sec:related_work}
Previous studies, which deal with LNL and are related to our flywheel model, can be categorized as loss adjustment, sample selection, and the hybrid approach for full data exploitation.

Loss adjustment methods estimate the confidence of each example for loss correction or reweighting~\cite{liu2015class,wang2018multiclass}.
The confidence level can be obtained by measuring the consistency of the model outputs.
Active bias~\cite{chang2017active} assumes that the samples with high prediction variance are likely to be corrupted ones.
As a branch of loss adjustment, label refurbishment methods \cite{reed2015bootstrap,arazo2019unsupervised} generate a refurbished label for each sample, in the form of a linear combination of the model output and the noisy label using the estimated label confidence as the weight.
Self-adaptive training~\cite{huang2020selfadaptive} applies exponential moving average to the refurbished labels for training stability.
SELFIE~\cite{song2019selfie} and AdaCorr~\cite{zheng2020error} are more selective.
They refurbish only the labels of the examples considered to be definitely corrupted.
However, model degeneracy may occur due to the bias towards easy classes or due to the erroneously refurbished labels~\cite{natarajan2013lnl,han2018coteaching}.

Sample selection approaches involve selecting correctly-labeled (clean) examples to avoid the risk of false correction.
In particular, collaborative learning and co-training processes have been widely studied.
In MentorNet~\cite{jiang2018mentornet}, a pre-trained mentor network provides the student network with a learning guide of clean samples and updates itself according to the feedback from the student.
In Co-teaching~\cite{han2018coteaching}, two differently initialized neural networks train each other by exchanging small-loss instances between themselves.
Co-teaching+~\cite{yu2019coteaching2} further employs a disagreement strategy between the two networks, and on the contrary, JoCoR~\cite{wei2020jocor} seeks to reduce the diversity of the two networks, while both are based on the learning processes of Co-teaching.
The multi-network learning methods introduced above treat small-loss instances as clean labeled data, in accordance with the observed memorization effect~\cite{arpit2017closer} of deep learning.
Cheng \etal~\cite{cheng2021cores} have proposed a confidence regularizer to make the neural network robust to noise and attempt to purify the samples by iterative sieving.

Hybrid approaches improve the data utilization by recycling the unselected data.
DivideMix~\cite{li2020dividemix}, divides the training data into a labeled set with clean samples and an unlabeled set with corrupted samples, and train the model on both the labeled and unlabeled data in a semi-supervised manner.
UNICON~\cite{karim2022unicon} have proposed uniform selection mechanism to circumvent the class imbalance in the selected clean data, and have also applied contrastive learning, to obtain robust feature representations.

These methods are related to our proposed method in the following two aspects. 
One, we distinguish clean data from corrupted ones by estimating the cleanness probability in the side of main network.
The cleanness probability is considered as a confidence level of each datapoint.
Two, the auxiliary network refurbishes noisy labels through resampling process and enables the main network to enjoy full data usage.
All these processes are formulated as an EM-based framework with a collaborative learning for LNL-flywheel.

\section{LNL-flywheel model}
\label{sec:method}
We are given a classification problem based on a training dataset of size $N$, denoted by  $\mathcal{D} \coloneqq \{(x_1, y_1), \dots , (x_N, y_N)\}$, where $x_n$ indicates a data example and $y_n \in \{1,\dots,K\}$ represents its ground-truth label.
Since training datasets contain label noise in real-world scenarios, we only observe noisy training dataset, $\widetilde{\mathcal{D}} \coloneqq \{(x_1, \tilde{y}_1), \dots , (x_N, \tilde{y}_N)\}$, which is a mixture of examples with clean and corrupted labels.
This section describes the proposed method, based on the interaction of two EM algorithms, to handle label noise in training classifiers.
We first overview our problem setup as well as an objective function in Section~\ref{sec:overview}, and provide the details of the two EM cycles in Section~\ref{sec:em-for-main} and \ref{sec:em-for-aux}. 
Figure \ref{fig:main_method} illustrates the overall training procedures of the proposed flywheel model.

\subsection{Overview}
\label{sec:overview}
From a macroscopic perspective, we consider the LNL task as an unsupervised learning conducted on the joint space ${\cal X}\times\tilde{\cal Y}$, in which clean data (\ie, data with correct labels) must be distinguished from corrupted data (\ie, data with incorrect labels) without supervision.
We accomplish this task by explaining given data with an appropriate mixture model that consists of clean data manifold as well as outlier distributions.
Let us define the mixture model in its conditional form, as
\begin{equation} \label{eq1}
p(\tilde{y}|x) = \gamma p_{\pi}(\tilde{y}|x) + (1 - \gamma) \epsilon(\tilde{y}|x),
\end{equation}
where $p_{\pi}(\cdot|x)$ is the distribution of the clean data manifold, $\epsilon(\cdot|x)$ is the distribution of corrupted data, and $\gamma$ is a proportion of clean data to be estimated.
We model $p_{\pi}(\cdot|x)$ using a classification network $g$, parameterized by $\theta$. Specifically, $g_\theta(x)$ denotes the softmax output over $K$ classes,
\ie, $g_{\theta}(x) = (g_\theta(x)[1],\dots,g_\theta(x)[K])$, given $x$ as the input. 
At the moment, for simplicity, we assume a uniform error corruption and treat $\epsilon(\tilde{y}|x)$ as a constant, \ie, $\epsilon(\tilde{y}|x)=\epsilon$, but will relax this assumption at the end of Section \ref{sec:em-for-aux} for generalization.
This formulation leads to the following equation:
\begin{align}
    p(x,\tilde{y}) & = p(x) p(\tilde{y}|x) \nonumber\\
    &= p_{data}(x) \left( \gamma g_\theta(x)[\tilde{y}] + (1-\gamma) \epsilon \right),
    \label{eq:joint-probability}
\end{align}
where we simply assume that $p(x)$ is given, \ie, $p(x)=p_\text{data}(x)$. 
We may try to fit our noisy data $\tilde{\mathcal{D}}$ to \eqref{eq:joint-probability},
however, the na\"ive log-likelihood maximization is unfortunately prone to overfitting the training data.
If the network has arbitrarily large capacity, the optimization ends up with a trivial solution ($\gamma\to 1$, $g_\theta(x)[\tilde{y}]\to p_\text{data}(\tilde{y}|x)$).
To address the challenge, we impose a structural manifold constraint that induces a sufficiently narrow subspace for data distribution to accomplish noise rejection.
Then, the objective function is rewritten as
\begin{equation}\label{eq:objective}
    \underset{\theta, \gamma}{\text{max}} ~ \mathbb{E}_{p_\text{data}(x,\tilde{y})}[\log p(x,\tilde{y})],
\end{equation}
subject to the manifold constraint. 
In practice, we adopt confidence regularizer (CR)~\cite{cheng2021cores} to realize the structural constraint of the manifold.
Section~\ref{sec:em-for-main} shows that CR prevents the model from collapsing into a trivial solution (see Lemma~\ref{thm:confidence-regularizer}).  

To solve the 
optimization problem, we propose a novel framework based on two EM cycles. 
In the main cycle, we estimate the cleanness of each example and update the mixture model parameters using the tentatively clean samples, deriving the main network $g_\theta(\cdot)$ and the mixture weight $\gamma$.
In the auxiliary cycle, we refurbish the tentatively corrupted examples by learning the auxiliary network $f_\phi(\cdot)$ via semi-supervised learning with their estimated pseudo-labels; the goal of this cycle is to increase the effective number of training data for updating the parameters of the main network.
The two EM cycles are interactive, as shown in Figure~\ref{fig:main_method}. 
They prove to be cooperative, pulling together for maximizing the single objective function in \eqref{eq:objective}. 
Once the objective has been maximized, clean data would sit on a revealed manifold, 
which creates a virtuous cycle for LNL without falling into self-confirmation bias.

\begin{figure*}[ht]
\begin{center}
\includegraphics[width=.95\textwidth]{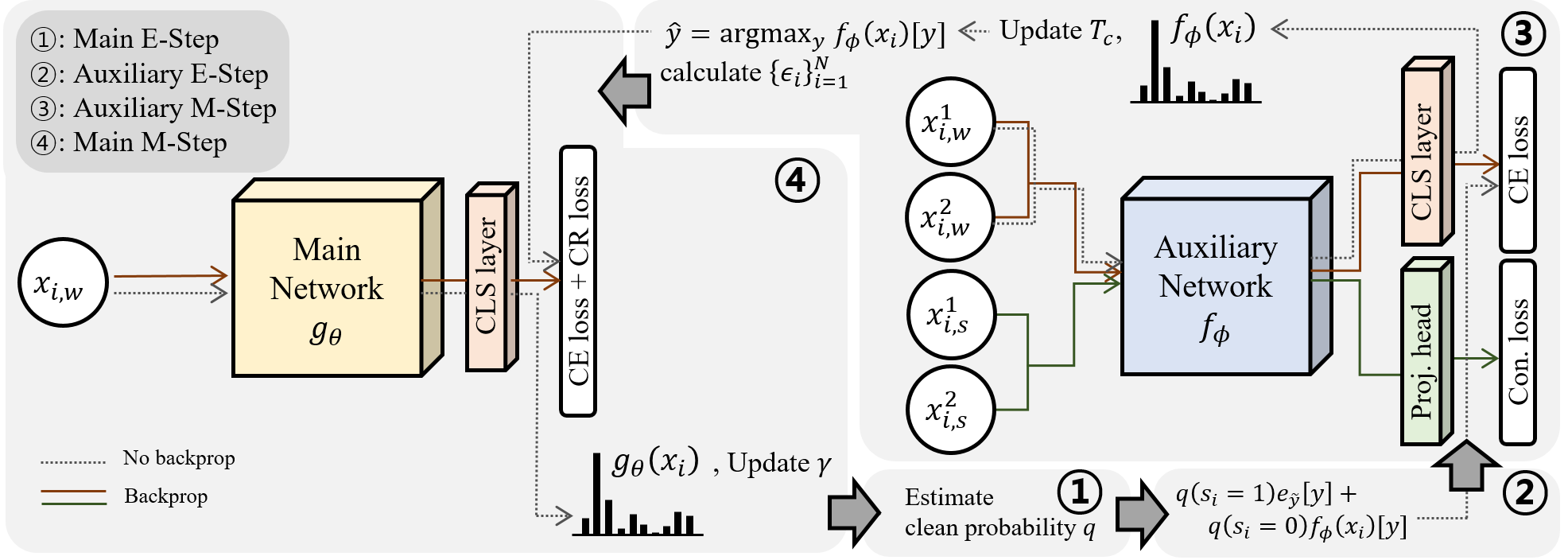}
\end{center}
\caption{Training LNL-flywheel. LNL-flywheel starts with \raisebox{.5pt}{\textcircled{\raisebox{-.9pt} {1}}} estimating clean probability of each datapoint (Eq.~\ref{eq:e-for-main}) after some initialization (Sec.~\ref{sub:implementation}).
This value is transmitted to the auxiliary side and used for \raisebox{.5pt}{\textcircled{\raisebox{-.9pt} {2}}} the true class probability estimation (Eq.~\ref{eq:qy-for-cleanish}). The true class probability serves as a supervisory signal for the auxiliary network. In \raisebox{.5pt}{\textcircled{\raisebox{-.9pt} {3}}} training the auxiliary network, contrastive learning is used together with strong augmentation (Sec.~\ref{sub:implementation}). After the parameters $f_\phi$ and $T_c$ are updated, the resampled dataset $\{(x_i,\hat{y}_i)\}_{i=1}^N$ is generated by $\hat{y}_i=\arg\max_y f_\phi(x_i)[y]$ and $\{\epsilon_i\}_{i=1}^N$ is calculated on the basis of $T_c$ (Eq.~\ref{eq:epsilon}). Then, the main network is \raisebox{.5pt}{\textcircled{\raisebox{-.9pt} {4}}} trained using the resampled dataset. In training the main network, confidence regularizer is also added as a regularizer 
(Eq.~\ref{eq:resampled-loss}). Finally, the parameters $g_\theta$ and $\gamma$ are updated, and the next cycle continues.}
\label{fig:main_method}
\end{figure*}

\subsection{EM for main network}
\label{sec:em-for-main}
The main EM cycle introduces a binary latent variable $s_i$ indicating whether the corresponding example $(x_i, \tilde{y}_i)$  is from clean data manifold $(s_i=1)$ or the corrupted data distribution $(s_i=0)$, and iterates the following two steps at every cycle $t$.

In E-Step, we estimate $s_i$ based on the current main model, $g_\theta(\cdot)$, which is given by the Bayes' rule as
\begin{align}
    \label{eq:e-for-main}
    q^{(t)}(s_i=1) & \coloneqq p^{(t-1)}(s_i=1 | x_i, \tilde{y}_i) \\
    &= \frac{\gamma^{(t-1)}g_\theta^{(t-1)}(x_i)[\tilde{y}_i]}{\gamma^{(t-1)}g_{\theta}^{(t-1)}(x_i)[\tilde{y}_i]+(1-\gamma^{(t-1)})\epsilon}. \nonumber
\end{align}

In M-Step, we update model parameters based on the membership of each example. 
For each sample, an appropriate distribution, either $\gamma g_\theta(x)[\tilde{y}]$ or $(1-\gamma)\epsilon$, is selected from the mixture model according to whether $s=1$ or $s=0$ and the log likelihood is computed based on the selected distribution.
The final evaluation takes the form of expectation because the data separation is only available in the probabilistic sense. 
The M-step is therefore to solve the following problem: 
\begin{multline}
    \underset{\theta, \gamma}{\max} \;
    \frac{1}{N}\sum_{i=1}^N \Bigl\{q^{(t)}{ (s_i=1)}\log\left(\gamma g_\theta(x_i)[\tilde{y}_i]\right)\\
    + q^{(t)}{ (s_i=0)}\log\left((1-\gamma)\epsilon\right)\Bigr\}, 
    \label{eq:m-for-main}
\end{multline}
subject to the manifold constraint.
Because $\gamma$ and $\theta$ are disjoint in \eqref{eq:m-for-main}, the maximization can be conducted separately. 
We obtain the closed-form solution for $\gamma$, which is given by
\begin{align}
    \gamma^{(t)} = \frac{1}{N}\sum_{i=1}^N q^{(t)}{ (s_i=1)},
    \label{eq:gamma}
\end{align}
and the optimization of $\theta$ is achieved by training $g_\theta(\cdot)$ with the following loss function: 
\begin{align}
    L(\theta) =  \; -\frac{1}{N}\sum_{i=1}^N q^{(t)}{ (s_i=1)}\log g_\theta(x_i)[\tilde{y}_i],
    \label{eq:reweighted-loss}
\end{align}
subject to the manifold constraint. 
Note that \eqref{eq:reweighted-loss} 
corresponds to the cross-entropy evaluated with reweighted samples, which makes the network training depend more on clean samples than corrupted ones. This is reasonable but the model generalization issue arises if the number of clean samples is too small (\eg, with high noise rate).
Hence, we replace reweighting with resampling~\cite{an2021resampling} to utilize the whole training dataset, by generating new samples using the auxiliary network. 

Let us denote, by $(x_i, \hat{y}_i^{(t)})$, the data samples generated from the auxiliary network at the cycle $t$. 
Then, we modify the loss function in \eqref{eq:reweighted-loss} to
\begin{align}
    L_{\text{main}, g}^{(t)} =  -\frac{1}{N}\sum_{i=1}^{N} \log g_\theta(x_i)[\hat{y}_i^{(t)}] +\lambda\eta{(g_\theta)}
    \label{eq:resampled-loss}
\end{align}
so that it is based on the resampled dataset. In \eqref{eq:resampled-loss}, we employ CR~\cite{cheng2021cores}, denoted by $\eta(g_\theta)$, as the regularizer relaxed from the manifold constraint via Lagrange multiplier $\lambda$.
The CR is defined as
\begin{align}
\eta(g_\theta)=\frac{1}{N}\sum_{i=1}^N {\mathbb E}_{p_{data}(\hat{y})}[\log g_\theta(x_i)[\hat{y}]].
\end{align}
We argue that it has a property that it reveals the clean data manifold in the mixed dataset (see Lemma~\ref{thm:confidence-regularizer}). 

\begin{lemma}
\label{thm:confidence-regularizer}
Suppose a neural classifier $g_\theta(x)$ which has enough capacity to realize an arbitrary function.
We consider the following regularized loss function for training:
\begin{multline*}
    L = -{\mathbb E}_{p_{\rm data}(x,\hat{y})}\Bigl[\log\, g_\theta(x)[\hat{y}]\Bigr] \\
    + \lambda {\mathbb E}_{p_{\rm data}(x)} {\mathbb E}_{p_{\rm data}(\hat{y})} \Bigl[\log\, g_\theta(x)[\hat{y}]\Bigr].
\end{multline*}
If the data mixture distribution has the same form as our model, \ie,
$p_{\rm data}(\hat{y}|x) = \gamma' p_\pi(\hat{y}|x)+(1-\gamma')\epsilon$
for some $\gamma'\in (0,1)$, and if the labels in the data are uniformly distributed, \ie, $p_{\rm data}(\hat{y}) = \epsilon$,
there exists $\lambda$ that makes the softmax output $g_\theta(x)[\hat{y}]$ approach $p_\pi(\hat{y}|x)$ in the non-parametric limit.
\end{lemma}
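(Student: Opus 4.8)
The plan is to exploit the non-parametric limit directly. Because $g_\theta$ is assumed to have enough capacity to realize an arbitrary function, the output distribution $g_\theta(x)[\cdot]$ at each input $x$ can be treated as a free probability vector over the $K$ classes, constrained only to be nonnegative and to sum to one. Since both expectations over $x$ in $L$ are driven by the same measure $p_{\rm data}(x)$ and the value of $g_\theta$ at one point does not interact with its value at another, the minimization of $L$ decouples into an independent pointwise problem for each $x$. I would therefore fix $x$, write $q_k \coloneqq g_\theta(x)[k]$, abbreviate $p_k \coloneqq p_{\rm data}(\hat y = k \mid x)$, and observe that the inner objective to be minimized is
\begin{equation*}
\ell(q) = -\sum_{k=1}^K p_k \log q_k + \lambda \sum_{k=1}^K \epsilon \log q_k = -\sum_{k=1}^K (p_k - \lambda\epsilon)\log q_k,
\end{equation*}
where I used the hypothesis $p_{\rm data}(\hat y) = \epsilon$.

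Next I would solve this constrained problem with a single Lagrange multiplier enforcing $\sum_k q_k = 1$. Stationarity in $q_k$ gives $q_k \propto p_k - \lambda\epsilon$, and normalization fixes the constant: since $\sum_k p_k = 1$ and $\sum_k \epsilon = K\epsilon = 1$ (because $p_{\rm data}(\hat y)$ is itself a distribution, forcing $\epsilon = 1/K$), the denominator is $\sum_k (p_k - \lambda\epsilon) = 1 - \lambda$. Hence the minimizer is $q_k = (p_k - \lambda\epsilon)/(1-\lambda)$. Substituting the assumed mixture $p_k = \gamma' p_\pi(\hat y = k \mid x) + (1-\gamma')\epsilon$ yields $q_k = \bigl(\gamma' p_\pi(k\mid x) + (1 - \gamma' - \lambda)\epsilon\bigr)/(1-\lambda)$, and the single choice $\lambda = 1 - \gamma'$ makes the coefficient of $p_\pi$ equal to one and the coefficient of $\epsilon$ vanish simultaneously, giving exactly $q_k = p_\pi(\hat y = k \mid x)$. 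This identifies the required multiplier and establishes the claim at every $x$.

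What remains, and the part I would be most careful about, is justifying that this stationary point is the genuine global minimizer and that it is admissible. Admissibility is immediate for the chosen $\lambda$: $p_k - \lambda\epsilon = \gamma' p_\pi(k\mid x) \ge 0$ and $1 - \lambda = \gamma' > 0$, so $q_k \ge 0$; the assumption $\gamma' \in (0,1)$ is exactly what keeps $\lambda \in (0,1)$ and the solution on the simplex. For optimality I would note that $\ell(q)$ is convex on the simplex whenever the effective weights $p_k - \lambda\epsilon$ are nonnegative, since its Hessian is diagonal with entries $(p_k-\lambda\epsilon)/q_k^2$, so the stationary point is the unique global minimum. The only delicate case is a class with $p_\pi(k\mid x) = 0$, where the corresponding weight vanishes and $q_k = 0$ lands on the boundary; I would dispatch this by a standard limiting argument. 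The main conceptual obstacle is thus not the algebra but cleanly articulating the non-parametric decoupling that reduces the functional optimization over $g_\theta$ to these per-point simplex problems.
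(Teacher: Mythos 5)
Your proposal is correct and follows essentially the same route as the paper's proof: decouple the functional minimization into per-point simplex problems in the non-parametric limit, impose the normalization constraint via a Lagrange multiplier to get $q_k \propto p_{\rm data}(\hat y = k\mid x) - \lambda p_{\rm data}(\hat y = k)$, and choose $\lambda = 1-\gamma'$ so the uniform component cancels and the normalizer becomes $\gamma'$. Your added remarks on convexity, admissibility, and the boundary case $p_\pi(k\mid x)=0$ are handled in the paper by the explicit $(\cdot)_+$ truncation, but the substance is identical.
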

\begin{proof}
See Appendix.
\end{proof}

\subsection{EM for auxiliary network}
\label{sec:em-for-aux}

In the LNL-flywheel, the auxiliary network conducts resampling process to provide a sufficiently large amount of training data to the main network.
The auxiliary network also runs on an EM cycle.

In E-Step, it attempts to estimate the probability on the true class of the given data $(x_i, \tilde{y}_i)$, as follows:
\begin{align}
    q_i^{(t)}(y)&=p^{(t-1)}(y|x_i, \tilde{y}_i)\nonumber \\
    &=\sum_{s_i=0}^{1} p^{(t-1)}(s_i|x_i,\tilde{y}_i)p^{(t-1)}(y|x_i,\tilde{y}_i,s_i).
    \label{eq:qy-for-cleanish-interim}
\end{align}
If $s_i=1$, $\tilde{y}_i$ should be identical to the true class, and therefore $p(\cdot|x_i,\tilde{y}_i,s_i=1)=e_{\tilde{y}_i}$, where $e_c$ denotes a one-hot vector with 1 at the $c$th entry.
If $s_i=0$, we throw away the corrupted label and estimate the true class based on the input $x_i$, so we model $p(\cdot|x_i, \tilde{y}_i, s_i=0)=p(\cdot|x_i)=f_\phi(x_i)$, where $f_\phi(x_i)$ denotes the softmax output of the auxiliary network.
In the side of the main network, we already computed the cleanness probability $q^{(t)}(s_i)=p^{(t-1)}(s_i|x_i, \tilde{y}_i)$ for each example in our noisy dataset (see Eq.~\ref{eq:e-for-main}).
We reuse it here, simplifying
\eqref{eq:qy-for-cleanish-interim} to
\begin{align}
    q_i^{(t)}(y) = &~ q^{(t)}(s_i=1)e_{\tilde{y}_i}[y]  \label{eq:qy-for-cleanish} \\
    & + q^{(t)}(s_i=0)f^{(t-1)}_\phi(x_i)[y]. \nonumber
\end{align}

In M-Step, the estimated probability $q_i^{(t)}(y)$ weighs each example in finding the parameters relevant to class $y$. 
Then, we maximize the likelihood, ${\cal L}_{\rm aux}^{(t)}$, which is given by
\begin{align}
    {\cal L}_{\rm aux}^{(t)}=\frac{1}{N}\sum_{i=1}^{N}\sum_{y=1}^{K}q_i^{(t)}(y)\log p(y,\tilde{y}_i|x_i).
\end{align}
By applying chain rule to $p(y,\tilde{y}_i|x_i)$, we obtain
\resizebox{1.0\linewidth}{!}{
  \begin{minipage}{\linewidth}
  \begin{align*}
    {\cal L}_{\rm aux}^{(t)} &=\frac{1}{N}\sum_{i=1}^N\sum_{y=1}^K q_i^{(t)}(y)\Bigl(\log p(y|x_i)+\log p(\tilde{y}_i|x_i,y)\Bigr)  \\
    &= \frac{1}{N} \sum_{i=1}^N \sum_{y=1}^K q_i^{(t)}(y)\Bigl(\log f_\phi(x_i)[y]+\log T_{x_i}[y,\tilde{y}_i]\Bigr), 
\end{align*}
  \end{minipage}
}
where $T_{x_i}[y,\tilde{y}_i] \coloneqq p(\tilde{y}_i|x_i,y)$ denotes the label corruption probability. We assume that $T_{x_i}$ remains constant for all $x_i$, and implement it by a $K \times K$ matrix.
Since the two models, $f$ and $T$, do not share any parameters between themselves, ${\cal L}_{\rm aux}^{(t)}$ can be separately maximized $\wrt$ each model. 

Maximizing ${\cal L}_{\rm aux}^{(t)}$ $\wrt$ $f$ is equivalent to training the auxiliary network with the loss
\begin{align}
    L_{{\rm aux}, f}^{(t)} &=-\frac{1}{N}\sum_{i=1}^N\sum_{y=1}^K q_i^{(t)}(y)\log f_\phi(x_i)[y],
    \label{eq:ce-loss-for-semisup}
\end{align}
which is a cross-entropy with $q_i^{(t)}(y)$ as the training target.
Note that the training target computed as in \eqref{eq:qy-for-cleanish} has a nice connection to {\em label guessing}~\cite{berthelot2019mixmatch} and {\em label smoothing} \cite{li2020dividemix}, two practices exercised in modern semi-supervised learning.
In training $f_\phi$, we also adopt MixUp~\cite{zhang2018mixup}, another common practice for semi-supervised learning, to carry out smooth generalization from discrete clean  examples to the continuous space.

\begin{figure}[t]
\begin{center}
\small
\begin{tabular}{ccc}
\multicolumn{3}{c}{\hspace*{-2.5em} Scale: \includegraphics[width=.25\textwidth]{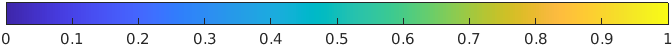} } \\
\\
\includegraphics[width=.29\linewidth]{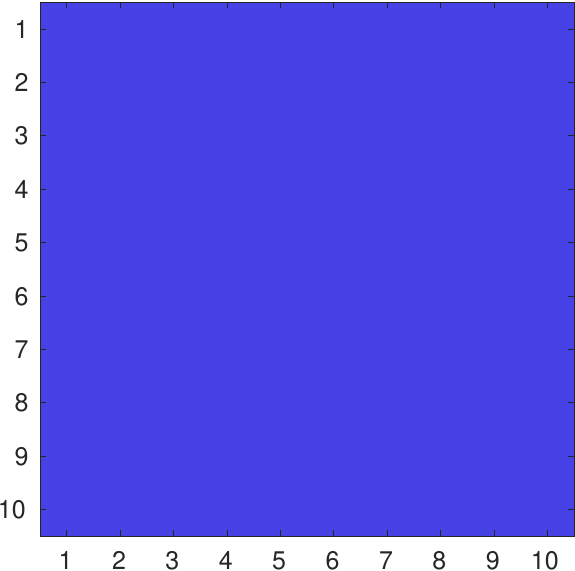} &
\includegraphics[width=.29\linewidth]{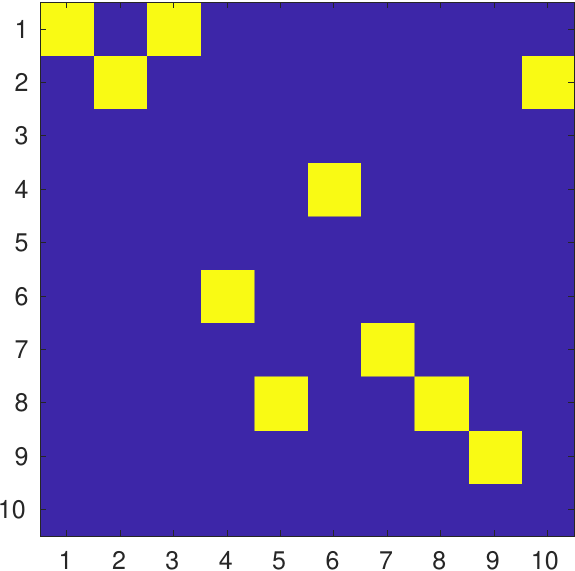} &
\includegraphics[width=.29\linewidth]{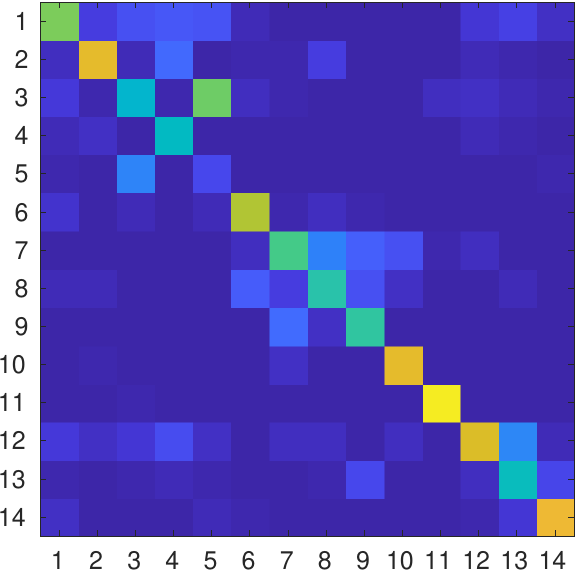} \\
\includegraphics[width=.29\linewidth]{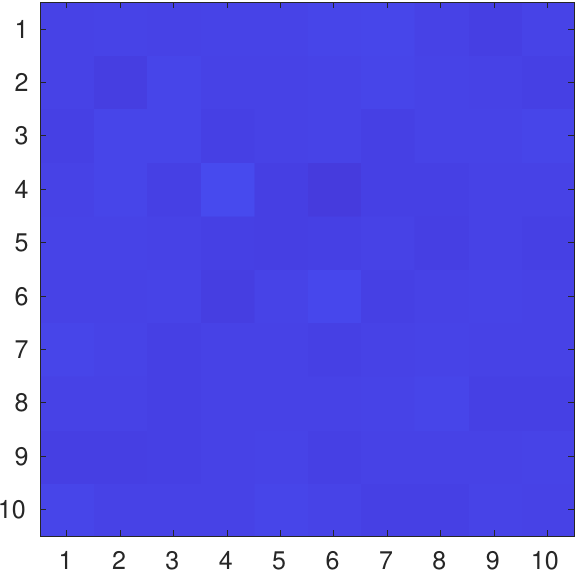} &
\includegraphics[width=.29\linewidth]{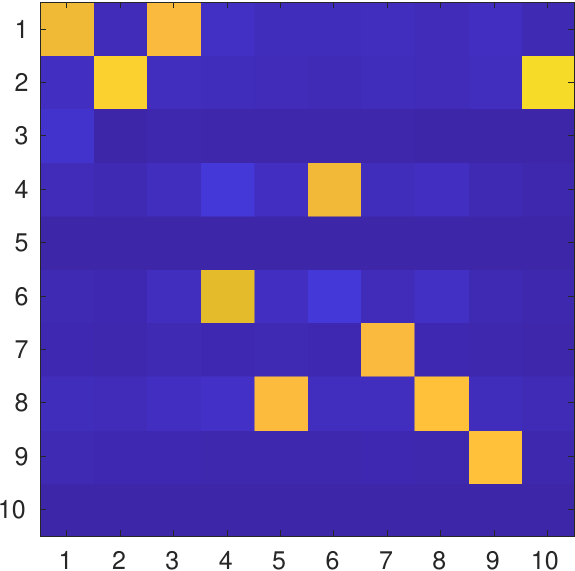} &
\includegraphics[width=.29\linewidth]{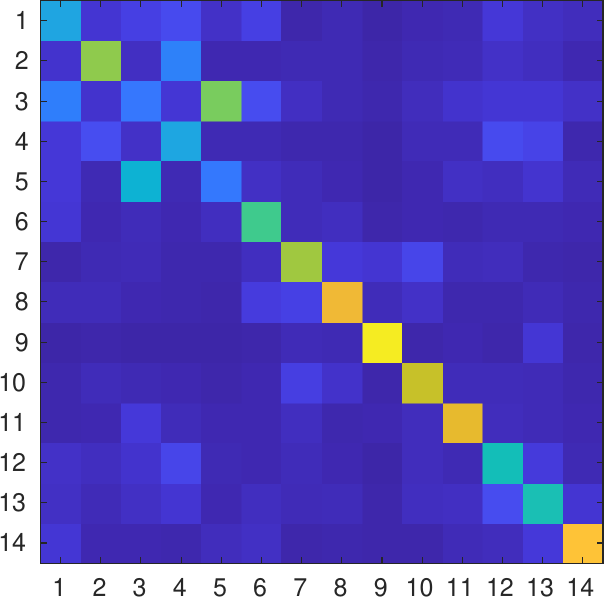} \\
CIFAR-10, Sym & CIFAR-10, Asym & Clothing1M
\end{tabular}
\end{center}
\caption{Visualization of label corruption matrices for synthetic (CIFAR-10) and real (Clothing1M) noises. \textbf{Upper.} Ground-truth, \textbf{Lower.} Estimated.  Color scale is displayed at the top. In each figure, the $i$th column corresponds to the corruption probability of the $i$th source class, \ie, $T_{c}(i,\cdot)$. The ground-truth of Clothing1M is obtained by manual inspection over a small fraction of the entire dataset \cite{xiao2015learning}. } 
\label{fig:transition}
\end{figure}
By solving the maximization problem $\wrt$ $T$ with the constraint that $T$ must be a valid  transition probability matrix (\ie, $\sum_{y'}T[y,y']=1$, $\forall y$ and $T[y,y'] \geq 0$, $\forall y$, $y'$), we obtain $T^{(t)}[y,y']=\sum_{i: \tilde{y}_i=y'}q_i^{(t)}(y) / \sum_i q_i^{(t)}(y).$

With the availability of $T$, we can refine $\epsilon$, assumed to be constant until now, for the next iteration of the main EM.
To lay a connection between $T$ and $\epsilon$, we first note that, in \eqref{eq:qy-for-cleanish}, $q_i^{(t)}(y)$ has two parts, \ie, \\
\resizebox{1.0\linewidth}{!}{
  \begin{minipage}{\linewidth}
  \begin{align*}
    q_i^{(t)}(y) = \underbrace{q^{(t)}(s_i=1)e_{\tilde{y}_i}[y]}_{\text{clean part}} + \underbrace{q^{(t)}(s_i=0)f^{(t-1)}_\phi(x_i)[y],}_{\text{corrupted part}}
  \end{align*}
  \end{minipage}
}
and modify $T$ and denote the modified expression by $T_c$ as follows, so that it is only based on corrupted examples:
\begin{align*}
T_c^{(t)}[y,y']=\frac{\sum_{i:\tilde{y}_i=y'}q^{(t)}(s_i=0)f^{(t-1)}_\phi(x_i)[y]}{\sum_{i}q^{(t)}(s_i=0)f^{(t-1)}_\phi(x_i)[y]}.
\end{align*}
Then, by definition, $\epsilon_i$ is calculated as
\begin{align}
    \epsilon_i^{(t)} &=p^{(t)}(\tilde{y}_i|x_i,s_i=0)\nonumber\\
    &= \sum_{y=1}^K p^{(t)}(y|x_i,s_i=0)p^{(t)}(\tilde{y}_i|x_i,y,s_i=0)\nonumber\\
    &=\sum_{y=1}^K f^{(t)}_\phi(x_i)[y]T_c^{(t)}[y,\tilde{y}_i].
    \label{eq:epsilon}
\end{align}
The main network can utilize $\epsilon_i^{(t)}$ in place of $\epsilon$ in \eqref{eq:e-for-main}, which is appropriate for the real world label noise that is not uniformly distributed.
Figure \ref{fig:transition} shows several examples of the label corruption matrices estimated in our experiments.
The classes represented by the numbers in figure \ref{fig:transition} are as follows: 1--Airplane, 2--Automobile, 3--Bird, 4--Cat, 5--Deer, 6--Dog, 7--Frog, 8--Horse, 9--Ship, 10--Truck for CIFAR-10; 1--T-Shirt, 2--Shirt, 3--Knitwear, 4--Chiffon, 5--Sweater, 6--Hoodie, 7--Windbreaker, 8--Jacket, 9--Down Coat, 10--Suit, 11--Shawl, 12--Dress, 13--Vest, 14--Underwear for Clothing1M.
The estimation turns to work well not only for symmetric but also for asymmetric as well as real world label noise.

\section{Experiments}
\label{sec:experiment}

\subsection{Implementation details}
\label{sub:implementation}
This section discusses the specifics of the implementation of our algorithm.

\paragraph{Initialization}
In advance of the start of the first cycle, some model parameters need to be initialized.
We initialize the networks, \ie, $g_\theta$ and $f_\phi$, by the warm-up training with a few epochs.
For $\gamma$, we use the training accuracy of $f_\phi$ evaluated after warm-up period.
Considering the early-stage learning tendency, this roughly approximates the ratio of clean labels.
For $\epsilon$, we initially set it to $1/K$, where $K$ is the number of classes, and iteratively update it using \eqref{eq:epsilon}. 

\begin{table*}[t]
  \centering
  \caption{Test accuracies (\%) on CIFAR-10/100 with symmetric and asymmetric label noise. Most results are extracted from \cite{karim2022unicon} while results with $^{\dagger}$ are reported in their respective papers.}
  \small
  \begin{tabular}{l|cccc|ccc|cccc|ccc}
    \toprule
    Dataset & \multicolumn{7}{c|}{CIFAR-10} & \multicolumn{7}{c}{CIFAR-100}\\
    \midrule
    Noise type & \multicolumn{4}{c|}{Symmetric} & \multicolumn{3}{c|}{Asymmetric} & \multicolumn{4}{c|}{Symmetric} & \multicolumn{3}{c}{Asymmetric}\\
    Noise level & 20\% & 50\% & 80\% & 90\% & 10\% & 30\% & 40\% &  20\% & 50\% & 80\% & 90\% & 10\% & 30\% & 40\% \\
    \midrule
    LDMI~\cite{xu2019ldmi} & 88.3 & 81.2 & 43.7 & 36.9& 91.1 & 91.2 & 84.0 & 58.8 & 51.8 & 27.9 & 13.7 & 68.1 & 54.1 & 46.2\\
    MixUp~\cite{zhang2018mixup} & 95.6 & 87.1 & 71.6 & 52.2 & 93.3 & 83.3 & 77.7 & 67.8 & 57.3 & 30.8 & 14.6 & 72.4 & 57.6 & 48.1\\
    JPL~\cite{kim2021jpl} & 93.5 & 90.2 & 35.7 & 23.4 & 94.2 & 92.5 & 90.7 & 70.9 & 67.7 & 17.8 & 12.8  & 72.0 & 68.1 & 59.5\\
    PENCIL~\cite{kun2019pencil} & 92.4 & 89.1 & 77.5 & 58.9 & 93.1 & 92.9 & 91.6 & 69.4 & 57.5 & 31.1 & 15.3 & 76.0 & 59.3 & 48.3 \\
    MOIT~\cite{diego2021moit} & 94.1 & 91.1 & 75.8 & 70.1 & 94.2 & 94.1 & 93.2  & 75.9 & 70.1 & 51.4 & 24.5 & 77.4 & 75.1 & 74.0 \\
    DivideMix~\cite{li2020dividemix} & 96.1 & 94.6 & 93.2 & 76.0 & 93.8 & 92.5 & 91.7 & 77.3 & 74.6 & 60.2 & 31.5  & 71.6 & 69.5 & 55.1 \\
    Sel-CL+$^\dagger$~\cite{li2022selcl} & 95.5 & 93.9 & 89.2 & 81.9 & 95.6 & 94.5 & 93.4 & 76.5 & 72.4 & 59.6 & 48.8  & 78.7 & 76.4 & 74.2 \\
    ELR~\cite{liu2020elr} & 95.8 & 94.8 & 93.3 & 78.7 & 95.4 & 94.7 & 93.0 & 77.6 & 73.5 & 60.8 & 33.4 & 77.3 & 74.6 & 73.2\\
    SFT$^\dagger$~\cite{wei2022sft} & - & 94.9 & - & - & - & - & 93.7 & - & 75.2 & - & -  & - & - & 74.9 \\
    UNICON~\cite{karim2022unicon} & 96.0 & 95.6 & 93.9 & 90.8 & 95.3 & 94.8 & 94.1 & 78.9 & 77.6 & 63.9 & 44.8 & 78.2 & 75.6 & 74.8 \\
    \midrule
    \textbf{OURS} & \textbf{97.0} & \textbf{96.7} & \textbf{95.9} & \textbf{94.4} & \textbf{97.3} & \textbf{96.6} & \textbf{96.2} & \textbf{79.7} & \textbf{77.7} & \textbf{71.9} & \textbf{58.8} & \textbf{80.9} & \textbf{79.1} & \textbf{78.6} \\
    \bottomrule
  \end{tabular}
  \label{tab:cifar-sym-asym}
\end{table*}

\paragraph{Data augmentation}
As shown in Figure \ref{fig:main_method}, we employ data augmentation for the auxiliary network, to push the envelope of performance systematically.
We consider two types of augmentations \cite{nishi2021augmentation}, \ie, weak augmentation (standard random crops and flips) and strong augmentation (14 random transformations; see \cite{cubuk2020randaugment} for details), and utilize them in two ways. 
\begin{enumerate}[(i)]
\item E-Step: To evaluate the network prediction $f_\phi^{(t-1)}(x_i)$ in \eqref{eq:qy-for-cleanish}, we average the predictions across two weakly augmented copies of $x_i$. 
\item M-Step: We train $f_\phi$ with the MixUp \cite{zhang2018mixup} interpolation on weakly augmented copy and with contrastive loss~\cite{chen2020contrastive, khosla2020contrastive} on strongly augmented copy.
Therefore, the total loss function for training the auxiliary network is $L_{{\rm total},f}=L_{{\rm aux},f}+\alpha L_{\rm con}$, where $L_{\rm con}$ is the contrastive loss and $\alpha$ is the corresponding coefficient.
\end{enumerate}

\paragraph{Hyperparameters}
The confidence regularizer coefficient $\lambda$ and the contrastive loss coefficient $\alpha$ remain as hyper-parameters. 
They are set to 3 and 0.025, respectively.

\paragraph{Inference}
We use the auxiliary network for inference.
The auxiliary network is responsible for generating data for the main network to train with.
This implies that it behaves like a teacher and thus is naturally expected to be better than the main network for inference, despite the naming.
Using a single network for inference creates advantages in terms of resources of the inference engine, in comparison with recent state-of-the-art methods~\cite{li2020dividemix, karim2022unicon} that take two-model strategy and use model ensemble for inference.

\subsection{Datasets and training details}
\label{sub:datasets}

We evaluate our method on five standard benchmarks, CIFAR-10, CIFAR-100~\cite{krizhevsky2009learning}, Tiny-ImageNet~\cite{ya2015tiny}, Clothing1M~\cite{xiao2015learning}, and WebVision~\cite{li2017webvision}.

\paragraph{CIFAR-10 and CIFAR-100} These two datasets consist of 50k training images and 10k test images with the size of $32\times 32$. The numbers in the suffix of dataset names indicate the number of classes. 
We artificially inject three types of label noise, \ie, symmetric, asymmetric, and instance-dependent label noise.
Symmetric noise is generated by flipping labels, with the probability specified by the noise rate, evenly to all possible classes.
Asymmetric noise is designed to simulate real-world label noise, only allowing labels to be replaced by similar classes with a given probability.
Instance-dependent label noise is a more realistic noise model in which each instance has a different flip rate. To generate the instance-dependent type of noise, we followed the method proposed in~\cite{xia2020partdependent, cheng2021cores}.
We evaluate algorithms for several different noise rates.
We employ an 18-layer PreAct ResNet \cite{he2016resnet} and train it using SGD with a momentum of 0.9, a weight decay of 0.0005, and a batch size of 128.
The network is trained for 300 epochs.
We set the learning rate initially to 0.02 and decay it according to the cosine annealing schedule without restart \cite{ilya2017sgdr}. The warm-up period is 10 both for CIFAR-10 and CIFAR-100.

\paragraph{Tiny-ImageNet}
There are 200 classes containing 500 images per class, and the image size is $64\times 64$. 
We use SGD with a momentum of 0.9, a weight decay of 0.0005, and a batch size of 32. 
We choose the initial learning rate of 0.01 and set its decay factor to 0.5 per 25 epochs after 100 epochs. 
We train our model for 200 epochs with a warm-up period of 10 epochs.

\paragraph{Clothing1M} This is a real world noisy dataset collected from online shopping websites.
This dataset contains 1M clothes images in 14 classes with unknown noise rate.
The ResNet-50 pre-trained with ImageNet \cite{russakovsky2015imagenet} was employed for fair comparison with the existing works.
We use SGD with a momentum of 0.9, a weight decay of 0.001, and a batch size of 32. The learning rate starts with 0.001 and reduces by a factor of 10 after 40 epochs.
The network is trained for 80 epochs.
In each epoch, we only use 1,000 mini-batches sampled from the training data while ensuring that the labels are balanced, following DivideMix \cite{li2020dividemix}.

\paragraph{WebVision} We test another real-world noisy dataset that contains 2.4 million images crawled from the web using the 1,000 concepts in ImageNet ILSVRC12 \cite{russakovsky2015imagenet}. Following previous work \cite{chen2019understanding}, we assess our methods on the first 50 classes of the Google image subset, using the Inception-ResNet-v2 \cite{szegedy2017inception-resnet}.
We use SGD with a momentum of 0.9, a weight decay of 0.0005, and a batch size of 32. The initial learning rate is 0.02 and its decaying rate is 0.5/25 epochs after 50 epochs. The network is trained for 100 epochs.

\begin{figure*}[t]
\begin{center}
\small
\begin{tabular}{ccc}
\includegraphics[width=.25\linewidth]{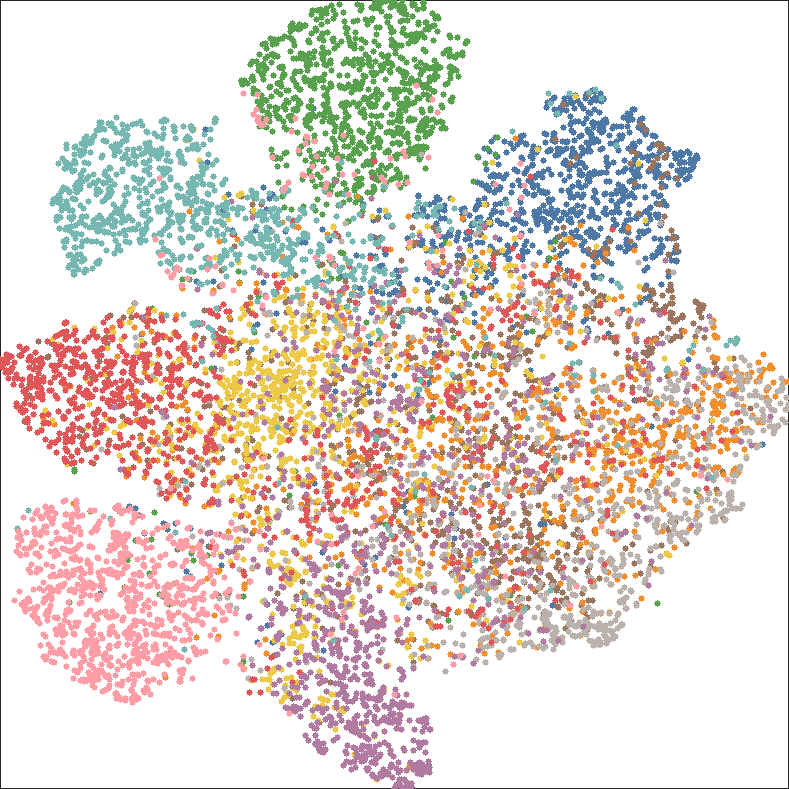} \hspace{5mm} &
\includegraphics[width=.25\linewidth]{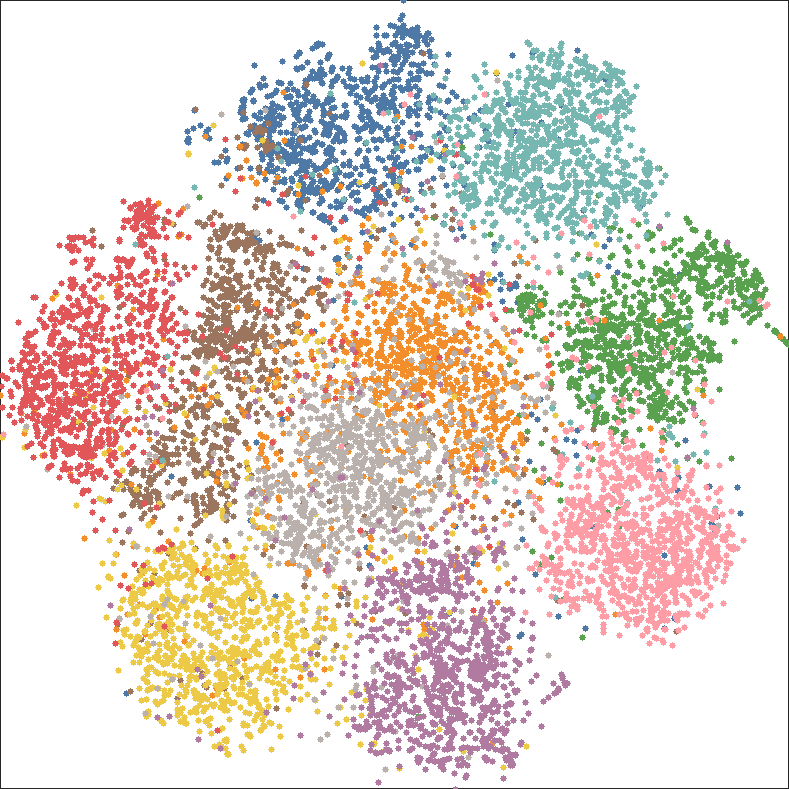} \hspace{5mm} &
\includegraphics[width=.25\linewidth]{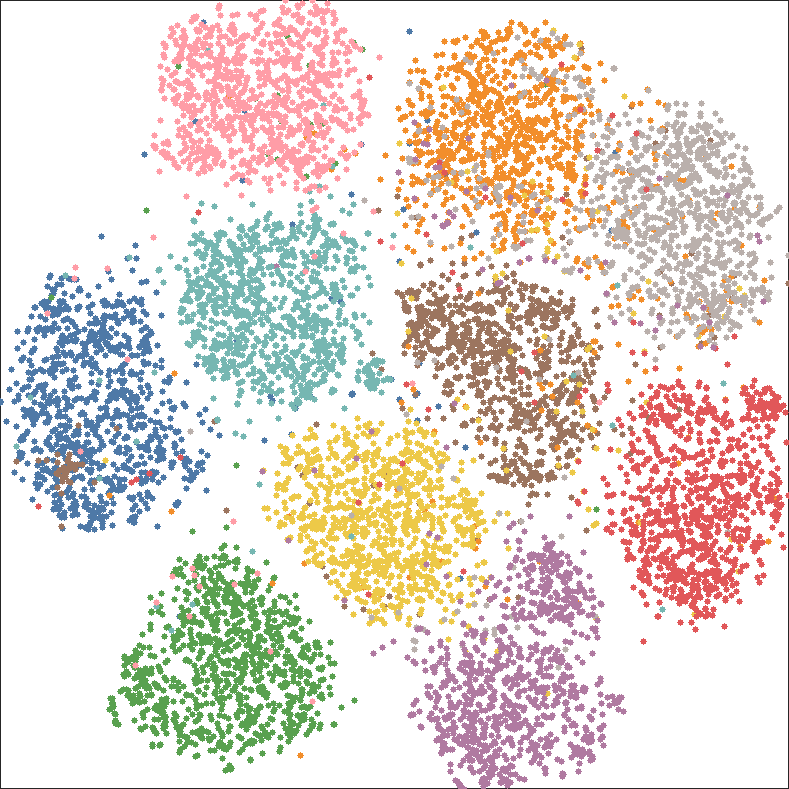}\\
(a) DivideMix~\cite{li2020dividemix} & (b) UNICON~\cite{karim2022unicon} & (c) LNL-flywheel
\end{tabular}
\end{center}
\caption{t-SNE visualizations of network features of test images. Networks are trained  for 300 epochs using (a) DivideMix (b) UNICON and (c) LNL-flywheel, on CIFAR-10 dataset with $90\%$ symmetric noise.}
\label{fig:tsne}
\end{figure*}

\begin{table}[t]
  \centering
  \caption{Test accuracies (\%) on CIFAR-10/100 with instance-dependent label noise.
  Most results are extracted from ~\cite{zhu2021cal}, while results with $^\dagger$ are reported in their respective papers. Methods with $^*$ are targeted for instance-dependent label noise.
  }
  \small
  \begin{tabular}{l|ccc|ccc}
    \toprule
    Dataset & \multicolumn{3}{c|}{CIFAR-10} & \multicolumn{3}{c}{CIFAR-100}\\
    Noise level & 20\% & 40\% & 60\% & 20\% & 40\% & 60\% \\
    \midrule
    Co-teaching+~\cite{yu2019coteaching2} & 89.8 & 73.8 & 59.2 & 41.7 & 24.5 & 12.6\\
    JoCoR~\cite{wei2020jocor} & 88.8 & 71.6 & 63.5 & 43.7 & 24.0 & 13.1\\
    Reweight-R~\cite{xia2019anchor} & 90.0 & 84.1 & 72.2 & 58.0 & 43.8 & 36.1\\
    Peer Loss~\cite{liu2020peer} & 89.1 & 83.3 & 74.5 & 61.2 & 47.2 & 31.7\\
    CORES$^{2*}$~\cite{cheng2021cores} & 91.1 & 83.7 & 77.7 & 66.5 & 59.0 & 38.6\\
    CAL$^*$\cite{zhu2021cal}& 92.0 & 85.0 & 79.8 & 69.1 & 63.2 & 43.6\\
    kMEIDTM$^{*\dagger}$~\cite{cheng2022instance} & 92.3 & 85.9 & - & 69.2 & 66.8 & -\\
    SFT$^\dagger$~\cite{wei2022sft}&91.4&90.0&-&71.8&69.9&-\\
    InstanceGM$^{*\dagger}$~\cite{garg2023instancegm} & 96.7 & 96.4 & - & 79.7 & \textbf{78.5} & - \\
    \midrule
    \textbf{OURS} & \textbf{96.7} & \textbf{96.8} & \textbf{96.1} & \textbf{79.8} & 77.9 & \textbf{76.2} \\
    \bottomrule
  \end{tabular}

    \vspace{-1mm}
  \label{tab:cifar-instance}
\end{table}

\begin{table}[t]
  \centering
  \caption{Test accuracies (\%) on Tiny-ImageNet under symmetric noise settings. Comparison with results reported by \cite{karim2022unicon} with the highest (Best) and the average (Avg.) test accuracy over the last 10 epochs.}
  \small
  \begin{tabular}{l|cc|cc|cc}
    \toprule
    Noise level & \multicolumn{2}{c|}{0\%} & \multicolumn{2}{c|}{20\%} & \multicolumn{2}{c}{50\%}\\
    Method & Best & Avg. & Best & Avg. & Best & Avg.\\
    \midrule
    F-correction~\cite{patrini2017making} & - & - & 44.5 & 44.4 & 33.1 & 32.8\\
    MentorNet~\cite{jiang2018mentornet} & - & - & 45.7 & 45.5 & 35.8 & 35.5\\
    Co-teaching+~\cite{yu2019coteaching2} & 52.4 & 52.1 & 48.2 & 47.7 & 41.8 & 41.2 \\
    
    M-correction~\cite{arazo2019unsupervised} & 57.7 & 57.2 & 57.2 & 56.6 & 51.6 & 51.3 \\
    NCT~\cite{sarfraz2021nct} & 62.4 & 61.5 & 58.0 & 57.2 & 47.8 & 47.4 \\
    UNICON~\cite{karim2022unicon} & \textbf{63.1} & \textbf{62.7} & 59.2 & 58.4 & 52.7 & 52.4 \\
    \midrule
    \textbf{OURS} & \textbf{63.1} & \textbf{62.7} & \textbf{60.3} & \textbf{60.0} & \textbf{53.4} & \textbf{53.0} \\
    \bottomrule
  \end{tabular}
  \label{tab:tiny}
\end{table}

\begin{table}[t]
  \centering
  \caption{Top-1 test accuracies (\%) on Clothing1M. Most results are extracted from ~\cite{karim2022unicon}, while results with $^\dagger$ are reported in their respective papers.
  }
  \small
  \begin{tabular}{lc}
  \toprule
  Method & Accuracy\\
  \midrule
  PENCIL~\cite{kun2019pencil} & 73.49 \\
  JPL~\cite{kim2021jpl} & 74.15\\
  InstanceGM$^\dagger$~\cite{garg2023instancegm} & 74.40\\
  
  DivideMix~\cite{li2020dividemix} & 74.76 \\ ELR~\cite{liu2020elr} & 74.81 \\
  UNICON~\cite{karim2022unicon} & 74.98 \\
  SFT$^\dagger$~\cite{wei2022sft} & \textbf{75.08}\\
  \midrule
  \textbf{OURS} & 75.04\\
  \bottomrule
  \end{tabular}
  \label{tab:clothing}
\end{table}

\begin{table}[t]
  \centering
  \caption{Top-1 and Top-5 test accuracies (\%) on WebVision. Most results are extracted from ~\cite{karim2022unicon}, while results with $^\dagger$ are reported in their respective papers.}
  \small
  \begin{tabular}{l|cc}
  \toprule
  Method & \textit{Top-1} & \textit{Top-5} \\
  \midrule
  Iterative-CV~\cite{wang2018iterative} & 65.24 & 85.34\\
  DivideMix~\cite{li2020dividemix}& 77.32 & 91.64\\
  ELR~\cite{liu2020elr}&77.78 & 91.68\\
  MOIT~\cite{diego2021moit} & 78.76 & -\\
  UNICON~\cite{karim2022unicon} & 77.60 & 93.44\\
  NGC$^\dagger$~\cite{wu2021ngc} & 79.16 & 91.84\\
  Sel-CL+$^\dagger$~\cite{li2022selcl} & 79.96 & 92.64\\
  \midrule
  \textbf{OURS} & \textbf{80.96} & \textbf{93.56}\\
  \bottomrule
  \end{tabular}
  \label{tab:webvision}
\end{table}

\subsection{Comparison with state-of-the-art methods}
We present the performance of LNL-flywheel, in comparison with several state-of-the-art methods under various label noise scenarios.

Tables~\ref{tab:cifar-sym-asym} presents our results on CIFAR-10 and CIFAR-100 with different types and levels of label noise.
LNL-flywheel outperforms the state-of-the-arts by a significant margin across all noise rates and all noise types.
Especially, it clearly shows robustness in difficult cases (\eg, high noise ratio, non-symmetric noise).
We ran t-SNE projection \cite{maaten2008tsne} with the responses of the networks, trained on CIFAR-10 with 90\% symmetric noise, to the validation data. Figure \ref{fig:tsne} compares the results with those of two top-performing state-of-the-arts, namely 
DivideMix \cite{li2020dividemix} and UNICON \cite{karim2022unicon}. Our method shows substantially better class separation than the other two.

Table~\ref{tab:cifar-instance} shows that our method is also competitive in instance-dependent label noise, despite not being designed against instance-dependent label noise.
InstanceGM~\cite{garg2023instancegm}, which is specially designed for instance-dependent label noise, performs slightly better than our method in CIFAR-100 with 40\% instance-dependent label noise. However, LNL-flywheel gains improvements in other various settings including Clothing1M.

Table \ref{tab:tiny} shows test accuracies on Tiny-ImageNet for symmetric noise settings.
LNL-flywheel not only outperforms the others, \eg, NCT \cite{sarfraz2021nct} and UNICON \cite{karim2022unicon}, in noisy settings but also shows comparable results in the absence of noise.
Remind that NCT and UNICON make predictions using an ensemble of multiple models.

Tables \ref{tab:clothing}, \ref{tab:webvision} demonstrate that LNL-flywheel consistently performs well on real-world noisy datasets, \ie, Clothing1M and WebVision.
In Clothing1M, SFT~\cite{wei2022sft} gains 0.04\%p over LNL-flywheel, however, we achieve significant performance improvement of SFT~\cite{wei2022sft} in CIFAR-10 and CIFAR-100, with various noise types and noise rates, \eg, 8.0\%p gain on CIFAR-100 with 40\% instance-dependent label noise.

\begin{figure}[t]
\centering
\includegraphics[width=.45\linewidth]{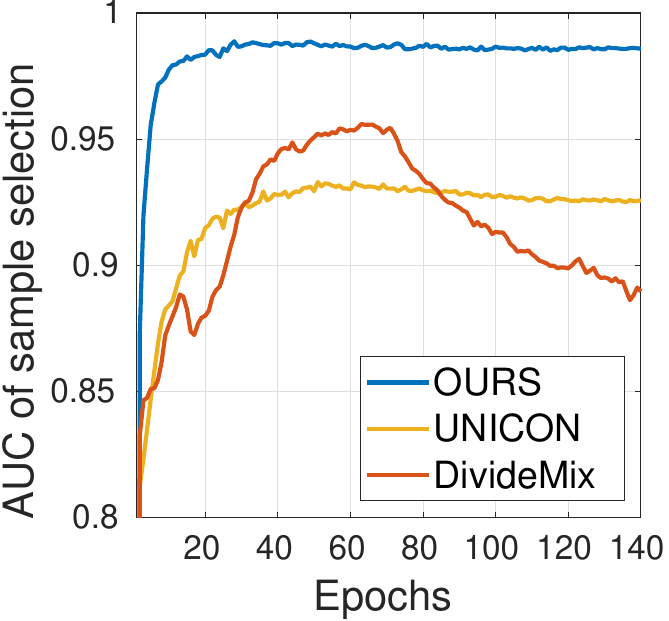}
\quad
\includegraphics[width=.45\linewidth]{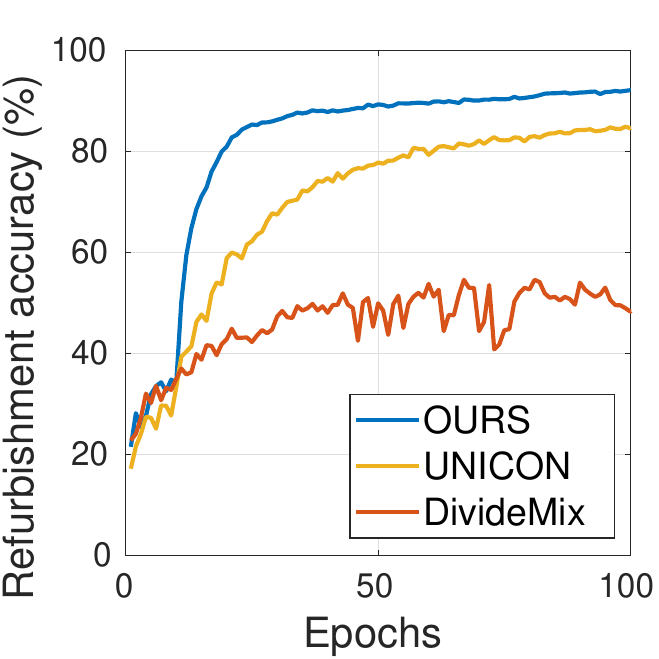}
\caption{Comparisons of clean sample selection performance (\textbf{left}) and label refurbishment accuracy (\textbf{right}) with other methods over epochs, on CIFAR-10, 90\% symmetric noise. 
}
\label{fig:functionality}
\end{figure}

\section{Analysis}
We conducted three types of analytical experiments. We will present them in this section.

\subsection{Functionality of two networks}
LNL-flywheel operates on the interconnection of two networks with different functionality. The main network acts as a distinguisher between clean and corrupted examples, while the auxiliary network as a refurbisher of the corrupted labels. The first set of analytical experiments is to investigate whether each network is functioning properly.

As an analysis associated with main network, we calculated the area under curve (AUC) representing the true and false positive rate of clean sample selection in each axis. Figure~\ref{fig:functionality}(left) shows how the AUCs evolve as training proceeds on CIFAR-10 with 90\% symmetric noise. It clearly demonstrates that our main network outperforms the others and that especially DivideMix becomes unstable after a certain number of epochs. 

In the side of auxiliary network, we probed the refurbishment performance by computing the clean label prediction accuracy over the training dataset. Figure~\ref{fig:functionality}(right) shows label refurbishment performance over epochs. As observed, our method has a significant increase in accuracy immediately after warm-up period (first 10 epochs), where two networks start to cooperate to create the virtuous cycles.

\subsection{Ablation studies}
To analyze the effect of the training components of LNL-flywheel, we conducted ablation studies and presented the results in Table \ref{tab:ablation}.

First, CR plays a crucial role for noise rejection by defining the structural constraint of the manifold in our objective function (Eq.~\ref{eq:objective}). It turns out to be indispensable for LNL-flywheel. 

The auxiliary model also proves significantly helpful in improving accuracy by allowing the entire data to be used.

In M-step of the auxiliary cycle, we estimate the label corruption probability matrix $T_c$ and calculate $\epsilon$ per training sample (Eq.~\ref{eq:epsilon}).
If we fix $\epsilon$ at $1/K$, the test accuracies degrade (see the third row), particularly exhibiting 1.8\%p drop in CIFAR-10 and 6.0\%p drop in CIFAR-100 with 40\% asymmetric noise.

In UNICON \cite{karim2022unicon}, contrastive learning (CL) was one of the key components (with CON in its name standing for contrastive learning).
Because CL is an effective feature learning technique that alleviates the memorization risk, we included CL in our framework, and it exhibits benefits on CIFAR-100 with high noise rates, although our method is not so greatly affected by CL as UNICON.

\begin{table}[t]
  \centering
  \caption{Ablation study with different training settings, symmetric (Sym.) and asymmetric (Asym.) label noise on CIFAR-10 and CIFAR-100.}
  \vspace{0.2em}
  \small
  \begin{tabular}{l|cc|c|cc|c}
    \toprule
    Dataset & \multicolumn{3}{c|}{CIFAR-10} & \multicolumn{3}{c}{CIFAR-100}\\
    \midrule
    Noise type & \multicolumn{2}{c|}{Sym.}& Asym. & \multicolumn{2}{c|}{Sym.}& Asym. \\
    Noise level & 80\% & 90\% & 40\% & 80\% & 90\% & 40\%\\
    \midrule
    w/o CR & 92.3 & 87.5 & 88.8 & 65.9 & 46.2 & 64.0\\
    w/o Aux. & 95.6 & 91.5 & 90.3 & 40.6 & 35.7 & 56.6\\
    $\epsilon$ fixed & 95.8 & 92.0 & 94.4 & \textbf{72.3} & 55.5 & 72.6\\
    w/o CL & 95.7 & \textbf{94.6} & \textbf{96.3} & 71.4 & 57.4 & \textbf{78.6}\\
    Full & \textbf{95.9} & 94.4 & 96.2 & 71.9 & \textbf{58.8} & \textbf{78.6}\\
    \bottomrule
  \end{tabular}
  \label{tab:ablation}
\end{table}

\begin{table}[t]
  \centering
  \small
  \caption{Comparison in execution time of training (per epoch) and inference (over all test data) on CIFAR-10.}
  \begin{tabular}{l|cc}
    \toprule
    Method & Training time & Inference time\\
    \midrule
    DivideMix~\cite{li2020dividemix} & 59.93\,s & 1.7423\,s\\
    UNICON~\cite{karim2022unicon} & 74.39\,s & 1.6947\,s\\
    \textbf{OURS} & \textbf{57.11\,s} & \textbf{0.9720\,s}\\
    \bottomrule
  \end{tabular}
  \label{tab:computational_cost}
\end{table}

\subsection{Computational cost}
LNL-flywheel employs only a single model, the auxiliary network, for inference, while the recent state-of-the-art methods widely use an ensemble of two DNNs. In Table \ref{tab:computational_cost}, our method shows its advantage in terms of the computational cost at testing time, spending less than 60\% of DivideMix~\cite{li2020dividemix} and UNICON~\cite{karim2022unicon}. For training, our method takes almost same amount of time as DivideMix~\cite{li2020dividemix}, while UNICON~\cite{karim2022unicon} is slower.
 
\section{Conclusion}
\label{sec:conclusion}
We presented \textit{LNL-flywheel}, a novel learning framework for robustness to noisy labels. We designed two networks which conduct two EMs cooperatively, from a considerate probabilistic reasoning.
One network distinguishes clean labels from corrupted ones, while the other refurbishes the corrupted labels. During training, the two networks collaboratively maximize a well-established objective function, making a virtuous cycle of revealing clean structure amongst data with noisy labels. By experiments, LNL-flywheel is shown to achieve state-of-the-art performance in standard benchmarks with substantial margins under various types of label noise, especially exhibiting robustness to extreme conditions such as high noise ratio and non-symmetric noise.
For inference, we use a single model, the label-refurbishing network, unlike most of recent methods that use an ensemble of multiple DNNs.

{\small
\bibliographystyle{ieee_fullname}
\bibliography{egbib}
}

\newpage

\appendix

\section{Proof of Lemma~\ref{thm:confidence-regularizer}}

For analysis in the non-parametric limit, we minimize $L$ directly \wrt \ $g_\theta(x)[\hat{y}]$'s, 
with the constraint on $g_\theta(x)[\hat{y}]$'s that they should be valid probability masses, \ie, $\sum_{\hat{y}} g_\theta(x)[\hat{y}] = 1$, $\forall x$  and $g_\theta(x)[\hat{y}] \geq 0$,  $\forall x$, $\hat{y}$. 

Relaxing  the equality constraints via Lagrange multipliers into
\[
 L_{\rm lag} = L + \sum_x \xi_x (\sum_{\hat{y}} g_\theta(x)[\hat{y}] - 1)
\]
and differentiating $L_{\rm lag}$ \wrt $g_\theta(x)[\hat{y}]$, we have 
\[
 \frac{\partial L_{\rm lag}}{\partial g_\theta(x)[\hat{y}]} = -\frac{p_{\rm data}(x,\hat{y})}{g_\theta(x)[\hat{y}]} + \lambda\frac{p_{\rm data}(x)p_{\rm data}(\hat{y})}{g_\theta(x)[\hat{y}]} +\xi_x, 
\]
which must be  zero for all $g^*_\theta(x)[\hat{y}]$'s at non-boundaries, \ie, $g^*_\theta(x)[\hat{y}] > 0$.
Therefore, 
\[
g^*_\theta(x)[\hat{y}] = \frac{1}{\xi_x}\Bigl(p_{\rm data}(x,\hat{y}) - \lambda p_{\rm data}(x)p_{\rm data}(\hat{y})\Bigr)
\]
as long as it is positive. Otherwise, $g^*_\theta(x)[\hat{y}]$ must be zero.
Overall, we can write 
\[
g^*_\theta(x)[\hat{y}] = \frac{1}{\xi_x}\Bigl(p_{\rm data}(x,\hat{y}) - \lambda p_{\rm data}(x)p_{\rm data}(\hat{y})\Bigr)_+,
\]
where $(\cdot)_+ = \max(\cdot, 0)$.
Expanding $p_{\rm data}(x,\hat{y})$ to $p_{\rm data}(x)p_{\rm data}(\hat{y}|x)$, 
we rearrange the above equation as 
\resizebox{1.0\linewidth}{!}{
  \begin{minipage}{\linewidth}
  \begin{align*}
    g^*_\theta(x)[\hat{y}] &= \frac{1}{\xi_x}\Bigl(p_{\rm data}(x)p_{\rm data}(\hat{y}|x) - \lambda p_{\rm data}(x)p_{\rm data}(\hat{y})\Bigr)_+ \\
    &= \frac{1}{Z_x}\Bigl(p_{\rm data}(\hat{y}|x) - \lambda p_{\rm data}(\hat{y})\Bigr)_+,  
  \end{align*}
  \end{minipage}
}
where $Z_x = \xi_x/p_{\rm data}(x)$.
We determine $Z_x$ so that it satisfies $\sum_{\hat{y}} g^*_\theta(x)[\hat{y}] = 1$. 

Let $p_{\rm data}(\hat{y}|x) = \gamma' p_{\pi}(\hat{y}|x) + (1-\gamma')\epsilon$, as assumed. Then, 
$g^*_\theta(x)[\hat{y}] $ becomes equal to
\[
 g^*_\theta(x)[\hat{y}] = \frac{1}{Z_x} \Bigl( \gamma' p_{\pi}(\hat{y}|x) + (1-\gamma')\epsilon - \lambda p_{\rm data}(\hat{y})
 \Bigr)_+.
\]
If $p_{\rm data}(\hat{y})$ is constant, 
by taking $\lambda = (1-\gamma') \epsilon/p_{\rm data}(\hat{y})$, 
we can cancel out the last two terms of the numerator, 
and can omit the operator $(\cdot)_+$ because $\gamma'$ and $p_{\pi}(\hat{y}|x)$ are both non\-negative. 
Consequently, $g^*_\theta(x)[\hat{y}] =  \gamma' p_{\pi}(\hat{y}|x)/Z_x$.
Finally, we find $Z_x = \gamma'$ by satisfying $\sum_{\hat{y}} g^*_\theta(x)[\hat{y}] = 1$. 

Therefore, $g^*_\theta(x)[\hat{y}]$ is reduced to $p_{\pi}(\hat{y}|x)$. 
If our network has enough capacity, $g_\theta(x)[\hat{y}]$ is believed to converge to $g^*_\theta(x)[\hat{y}]$ via an appropriate learning process \cite{goodfellow2014gan}. 

\end{document}